\documentclass[letterpaper, 10 pt, conference]{ieeeconf}  

\IEEEoverridecommandlockouts                              

\overrideIEEEmargins                                      



\usepackage{graphicx}
\usepackage{mathtools}
\usepackage{amsmath} 
\usepackage{amssymb}  
\usepackage{hyperref}
\usepackage{mathrsfs}
\usepackage{bm}
\usepackage{bbm}
\usepackage{color}
\usepackage{colortbl}
\usepackage{dsfont}

\usepackage{algorithm}
\usepackage[noend]{algpseudocode}

\DeclareMathOperator*{\argmin}{argmin}
\newcommand{\norm}[1]{\left\lVert#1\right\rVert}

\usepackage[labelformat=simple]{subcaption}
\DeclareCaptionLabelSeparator{periodspace}{.\quad}
\captionsetup{font=footnotesize,labelsep=periodspace,singlelinecheck=false}
\captionsetup[sub]{font=footnotesize,singlelinecheck=true}

\usepackage{amsthm}
\theoremstyle{plain}

\newtheorem{prop}{Proposition}

\title{\LARGE \bf
Free-Space Ellipsoid Graphs for Multi-Agent Target Monitoring
}

\author{Aaron Ray$^{1}$, Alyssa Pierson$^{2}$, Daniela Rus$^{1}$
\thanks{$^{1}$Computer Science and Artificial Intelligence Laboratory, Massachusetts Institute of Technology, Cambridge, MA 02139, USA
        {\tt\small \{aray, rus\}@csail.mit.edu}}%
\thanks{$^{2}$ Department of Mechanical Engineering, Boston University, Boston, MA 02215, USA
        {\tt\small pierson@bu.edu }}%
\thanks{This work was supported by the Office of Naval Research Grant N00014-18-1-2830, SUTD-DSO Project Astralis, and Boston University startup funds. We are grateful for their support.}
}

\begin{document}

\maketitle
\thispagestyle{empty}
\pagestyle{empty}

\begin{abstract} 
We apply a novel framework for decomposing and reasoning about free space in an environment to a multi-agent persistent monitoring problem. 
Our decomposition method represents free space as a collection of ellipsoids associated with a weighted connectivity graph. The same ellipsoids used for reasoning about connectivity and distance during high level planning can be used as state constraints in a Model Predictive Control algorithm to enforce collision-free motion. This structure allows for streamlined implementation in distributed multi-agent tasks in 2D and 3D environments. 
We illustrate its effectiveness for a team of tracking agents tasked with monitoring a group of target agents. Our algorithm uses the ellipsoid decomposition as a primitive for the coordination, path planning, and control of the tracking agents.
Simulations with four tracking agents monitoring fifteen dynamic targets in obstacle-rich environments demonstrate the performance of our algorithm.
\end{abstract}

\section{Introduction}

As large-scale nonlinear programming solvers have become more performant, many applications have arisen where autonomous agents use an optimization-based control framework to simultaneously optimize the desired trajectory and necessary control inputs. Recent examples include Model Predictive Control (MPC) for vehicles entering and exiting a platoon \cite{graffione2020non}, UAVs flying in formation subject to radio path loss constraints \cite{grancharova2015uavs}, and videography drones flying subject to visibility and aesthetic constraints \cite{ray2021multi}. However, collision avoidance constraints that prevent crashes with the surrounding environment can present problems when scaling to more general environments. The algorithms often rely on strict assumptions about obstacle structure, do not scale well with environment complexity, or are difficult to implement efficiently for real-time execution. Moreover, it can be difficult to connect the local collision avoidance constraints with global progress of the agent. It is desirable to reason about high level task coordination with the same framework used to eventually enforce collision-free motion.

We present an algorithm that utilizes an ellipsoidal decomposition of the environment to unify reasoning about high level planning and lower level optimization-based control. Our algorithm begins by decomposing the operational area into a set of ellipsoidal, obstacle-free regions. We use a novel method to connect these regions in a graph structure that reflects both connectivity and distance. Agent coordination can be reasoned about in the graph representation, and the same geometric primitives used to decompose the space are used as state constraints in an MPC algorithm. The algorithm scales very well with the complexity of the environment and is directly applicable to both 2D and 3D domains.                                                      

We demonstrate the decomposition framework in a multi-agent persistent patrolling task evocative of automated wild life census, behavior logging, event tracking, or specialized close-up videography. These applications require capabilities for tracking of specific features of the dynamic objects within given constraints. A key requirement of such capabilities is high-level coordination between agents to ensure each agent's actions are useful to the whole. We demonstrate that our decomposition framework is useful in both the task coordination and low level control for a team of $N$ tracking agents patrolling a group of $M$ target agents. Our decomposition-based assignment algorithm performs favorably compared to obstacle-oblivious Voronoi-based assignment methods in two simulation environments.                                                                   

\begin{figure}
    \centering
    \includegraphics[width=\columnwidth]{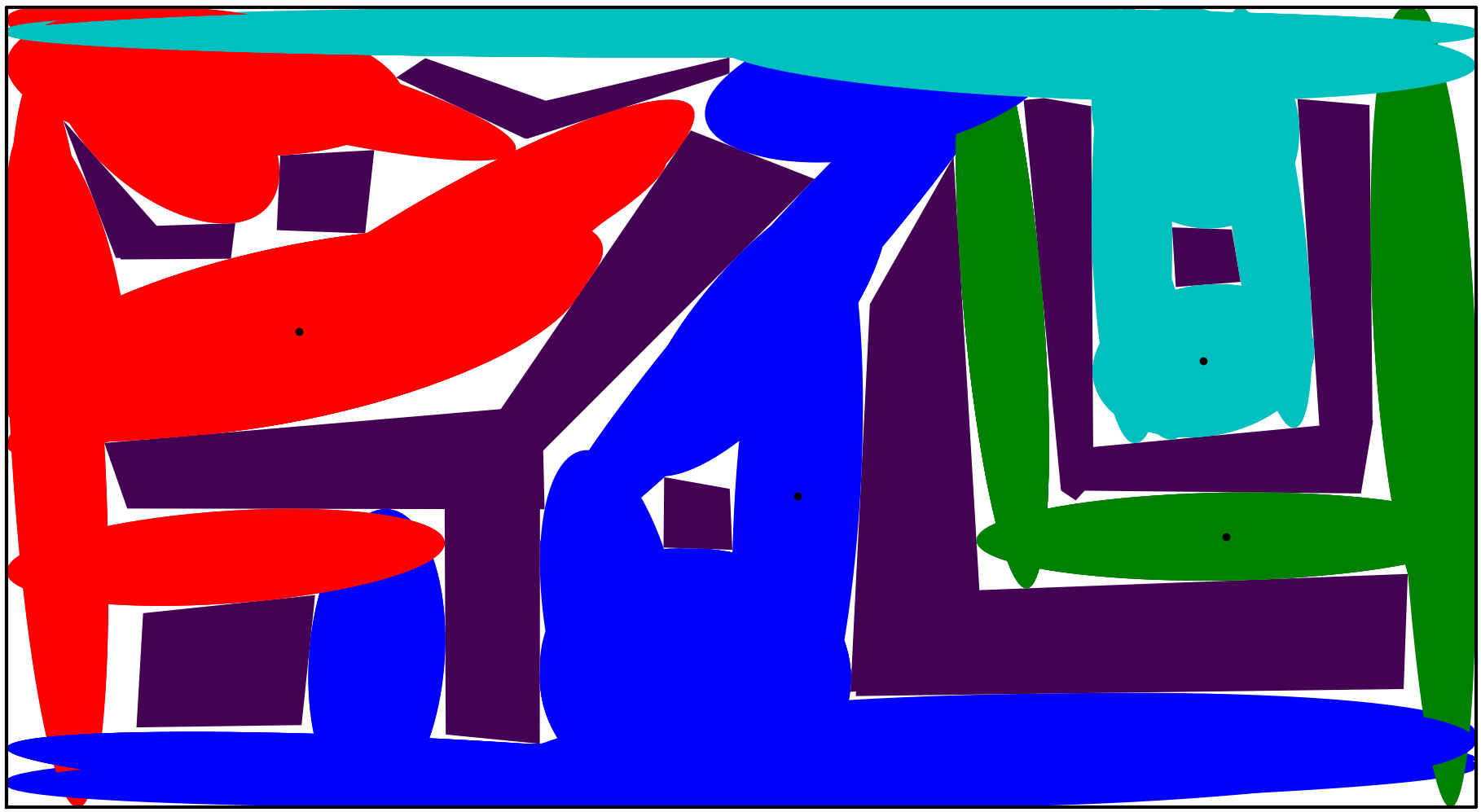}
    \caption{Example ellipsoid decomposition of the ``Jagged'' environment and region assignment for team of tracking agents, with obstacles shown in purple and region assignments shown in red, green, blue, and cyan.}
    \label{fig:ellipse_assignment_custom2}
\end{figure}

The main contributions of the paper are:                                        
\begin{itemize}
    \item A novel method of decomposing and reasoning about free space in obstacle-rich environments that serves as a primitive for task assignment, path planning, and control
    \item An MPC implementation that leverages the free space decomposition generated by higher-level planning
    \item A demonstration of the proposed framework with a complex multi-agent persistent monitoring problem
\end{itemize}       
The rest of the paper is organized as follows: We present related work in Section \ref{sec:related_work}. Section \ref{sec:decomposition_and_representation} presents our central decomposition and graph representation algorithm. We combine the elllipsoidal free-space representation with MPC in Section \ref{sec:mpc}. Finally we demonstrate these techniques in a multi-agent monitoring problem in Section \ref{sec:case_study}.

\section{Related Work}\label{sec:related_work}
The fields of computer graphics, computational geometry, and robotics have extensive literature on polygonal decompositions. Seidel's trapezoidal algorithm \cite{seidel1991simple} is a popular option for 2D decompositions. The Delaunay triangulation \cite{lee1980two} is also often used in 2D and 3D. While both of these algorithms are efficient and provide complete decompositions of a region, the small individual pieces that they generate are a poor match for constraints in optimization-based control. We would instead prefer a smaller number of large regions in the decomposition. Finding the minimum number of convex polyhedra to cover a space is NP-Hard \cite{Culberson1988CoveringPI, abrahamsen2021}, although there has been work in the direction of approximate minimal covers. Unfortunately existing solutions in this direction have various shortcomings (e.g. polynomial but large runtime \cite{eidenbenz2003} or restrictions to 2D \cite{feng1975}) that make them poor options for the 2D and 3D planning and control tasks that we have in mind. However, minimizing the total number of elements in the decomposition is less important than the quality of each. Even if the number of convex elements is not minimal, if each is large then a path between two points may only require traversing a small number of regions. To this end, we build upon the IRIS algorithm \cite{deits-iris} that finds a single large convex region around a seed point.

The ideas in \cite{sarmientoy2005} are the closest to our own. In \cite{sarmientoy2005}, the authors also decompose the free space into a collection of large convex regions to aide path planning. In contrast, our fixed-size representation of each convex region is more appropriate for optimization-based control, and our novel distance metric for building the connectivity graph is more general-purpose than their task-specific graph building.

Our example patrolling task draws inspiration from prior work in multi-agent target-tracking, coverage, persistent monitoring, and pursuer-evader games. One approach to tracking multiple targets with multiple agents is to divide the environment into regions among the trackers, such that each tracker is responsible for their subset of the environment. These coverage control approaches have been demonstrated in non-convex domains \cite{Stergiopoulos2015, Papatheodorou2017, Pierson2017b,Collins2021} or when the total number of targets may be unknown \cite{Zhou2019, Dames2020}. These coverage problems can also include constraints on viewpoints \cite{Petrlik2019}, or incorporate additional camera controls \cite{Hoenig2016, Arslan2018} of the tracking vehicles. In persistent monitoring problems, the tracking agents must design strategies to continuously revisit targets within an environment, such as minimizing the time between observations of a given target \cite{Alamdari2014}. Other work focuses on static targets \cite{Yu2016} or non-convex environments \cite{PalaciosGasos2019}. 

Many of these existing works either simplify dynamic and kinematic constraints in order to provide capture guarantees, or consider the full reachability analysis at great computational cost. We demonstrate that our ellipsoid-based graph representation of the environment can address the target assignment and agent control problems in a unified framework and supports realistic, general dynamics models and realtime control.  In the same vein as the author's prior work on cooperative target tracking with MPC \cite{Pierson2016}, we show the benefits of jointly reasoning about high-level task coordination and low level control. Here, we focus on how our novel decomposition enables the high level task planning to be carried out with the same geometric primitives used to ensure safe motion by the MPC. 

\section{Ellipsoid Decomposition and Representation}\label{sec:decomposition_and_representation}
\subsection{Decomposition}
We compute an approximate decomposition of the environment's obstacle-free space as a union of convex regions using the IRIS library \cite{deits-iris}. Given a seed point, IRIS finds hyperplanes that separate the seed point from all obstacles. The seed point is expanded to the maximum volume ellipsoid enclosed by the hyperplanes, and the process repeats by finding new hyperplanes that separate the ellipsoid from the obstacles. We generate a lattice $\mathcal{L}$ of points with spacing $d$ and begin with no regions in the free-space decomposition $\mathcal{E}$. Then, for each point in $l \in \mathcal{L}$, if it is not contained by any region in $\mathcal{E}$, we use IRIS to add a new obstacle-free region with $l$ as the seed point, detailed in Algorithm \ref{alg:ellipse_decomposition}. 

The regions returned by IRIS can be represented in two ways: the bounding hyperplanes of the polyhedral regions, or the maximum volume ellipsoid constructed within them. Using the region defined by the hyperplanes is a more complete decomposition, but we prefer the ellipsoids for two reasons. First, the description of each region (a centroid and shape matrix) is of a constant size, while the polyhedral regions have varying numbers of sides. Constant-size region description yields a more desirable MPC implementation, as the resulting nonlinear program also has a constant size. Second, ellipsoid-based region descriptions lead to an elegant method of spatially relating free-space regions to each other. We denote ellipsoids as $\mathcal{M}(M, \mathbf{m}) = \{x~|~\norm{x - \mathbf{m}}_M \leq 1\}$, where $\norm{w}_A^2 = w^TAw$.

\begin{algorithm}
\caption{Ellipsoid Decomposition}\label{alg:ellipse_decomposition}
\begin{algorithmic}[1]
\Procedure{EllipsoidDecomposition}{$\mathcal{O}$}
\State $\mathcal{E} = \emptyset$ \Comment{Ellipsoid set initialized empty}
\State $\mathcal{L} \gets $ LatticePoints($d$) \Comment{Generate lattice points}
\State $\mathcal{L} \gets \mathcal{L} \setminus \mathcal{L} \cap \mathcal{O}$ \Comment{Cull points in obstacles}
\While{$|\mathcal{L}| > 0$}
\State $l \gets l \in \mathcal{L}$ \Comment{Choose new lattice point}
\State $\mathcal{L} \gets \mathcal{L} \setminus l$ \Comment{Remove point from set}
\State $\mathcal{E} \gets $ InflateEllipsoid($l, \mathcal{O}$) $\cup~\mathcal{E}$ \Comment{Inflate ellipsoid from lattice seed point}
\For{$p \in \mathcal{L}$}
\If{$p \in \mathcal{E}$}
    \State $\mathcal{L} \gets \mathcal{L} \setminus p$ \Comment{Cull points covered by ellipsoid set}
\EndIf
\EndFor
\EndWhile
\State \textbf{return} $\mathcal{E}$
\EndProcedure
\end{algorithmic}
\end{algorithm}

\begin{figure}
        \centering
        \includegraphics[width=\columnwidth]{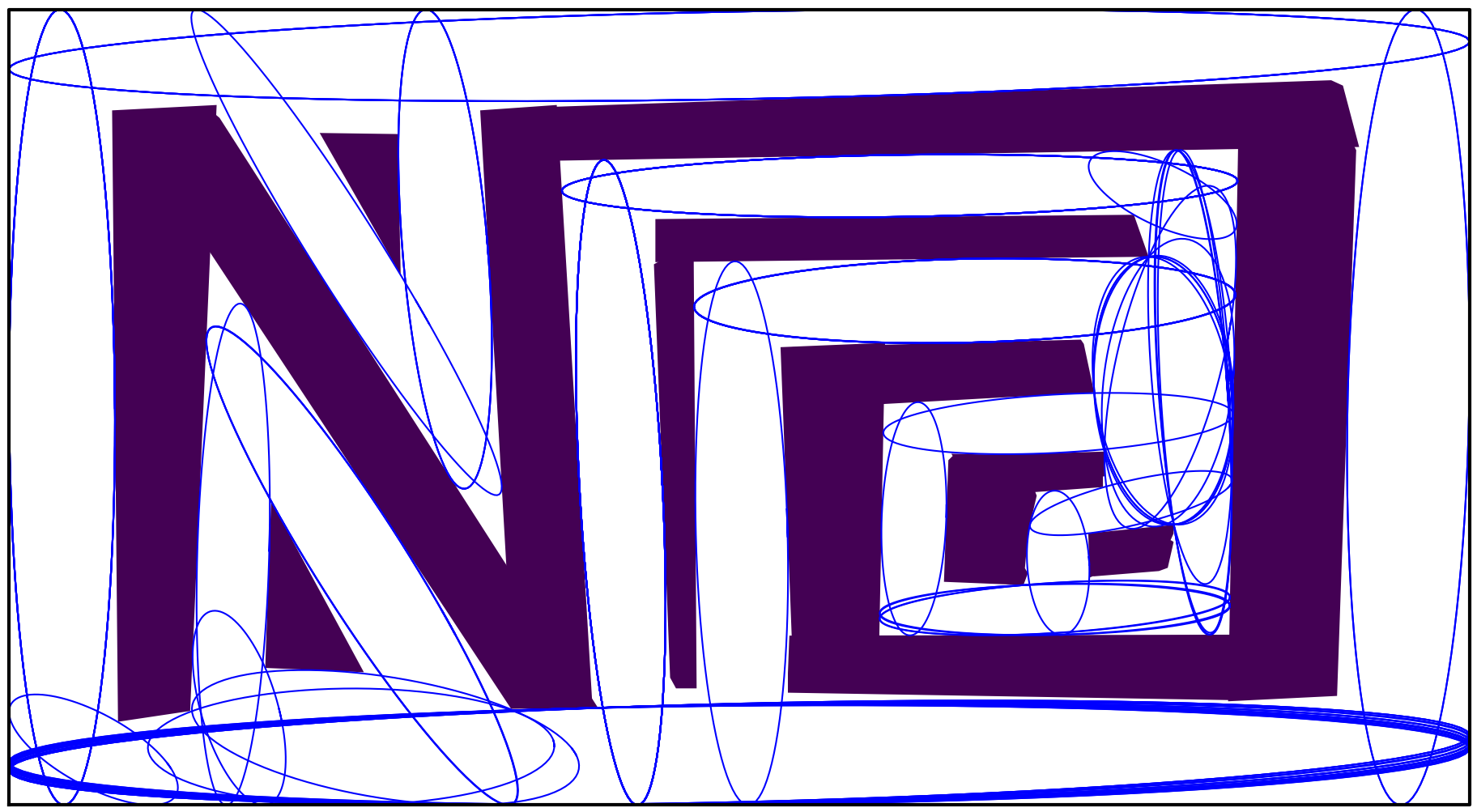}
        \caption{Example ellipsoid decomposition generated by Algorithm \ref{alg:ellipse_decomposition}. Obstacles are represented in purple and the free-space ellipsoids in blue.}
    \label{fig:ellipsoid_decomp_and_assignment}
\end{figure}

\subsection{Graph Construction}\label{sec:graph_construction}
To facilitate using this collection of ellipsoids for spatial planning, we construct a graph $\mathcal{G} = (V, \mathscr{E})$ that represents the connectivity of the ellipsoids. Each ellipsoid $\mathcal{E}_j \in \mathcal{E}$ is represented by a vertex $v_j \in V$, and two vertices share an edge if their associated ellipsoids intersect. We next present an efficient method for checking ellipsoid intersection, which has the benefit of finding a point in the intersection of the ellipsoids if one exists.

For two ellipsoids $\mathcal{A}(A,\mathbf{a})$ and $\mathcal{B}(B, \mathbf{b})$, let 
\[E_\lambda = \lambda A + (1-\lambda)B,~~\mathbf{m}_\lambda = E_\lambda^{-1}(\lambda A\mathbf{a} + (1 - \lambda)B\mathbf{b}),\] for some $\lambda \in [0,1]$. As proven in \cite{igor-2014}, $\mathbf{m}_\lambda$ is in both ellipses simultaneously for some value of $\lambda$ on $[0,1]$ if and only if the two ellipsoids intersect. For $\lambda = 0$ and $\lambda = 1$, $\mathbf{m}_\lambda$ is equal to the origin of ellipse $\mathcal{B}$ and $\mathcal{A}$ respectively. A bisection search on $\lambda$ will either return a point in the intersection of $\mathcal{A}$ and $\mathcal{B}$, or a point $\mathbf{m}_{\tilde\lambda}$ that is not in either ellipsoid--a certificate that the ellipsoids do not intersect.

The correctness of our bisection search relies on $\mathbf{m}_\lambda$ moving monotonically from $\mathcal{B}$ to $\mathcal{A}$. Proposition \ref{prop:monotonic} proves the monotonic behavior for axis-aligned ellipsoids with a diagonal shape matrix. In practice, we observe that monotonicity of $\mathbf{m}_\lambda$ holds for general ellipsoids. 

\begin{prop}\label{prop:monotonic}
Consider two axis-aligned ellipsoids $\mathcal{A}(A, \mathbf{a})$ and $\mathcal{B}(B, \mathbf{b})$ with diagonal shape matrices $A$ and $B$. $\norm{\mathbf{m}_\lambda - \mathbf{b}}_B$ is monotonically increasing for $\lambda \in [0, 1]$.
\end{prop}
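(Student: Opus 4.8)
The plan is to exploit the hypothesis that $A$ and $B$ are diagonal so that the entire computation decouples coordinatewise. Write $A = \mathrm{diag}(A_{11},\dots,A_{nn})$ and $B = \mathrm{diag}(B_{11},\dots,B_{nn})$ with all $A_{ii}, B_{ii} > 0$, and denote the centers by their components $\mathbf{a} = (\mathbf{a}_i)$ and $\mathbf{b} = (\mathbf{b}_i)$. Since $E_\lambda = \lambda A + (1-\lambda)B$ is then diagonal with entries $\lambda A_{ii} + (1-\lambda)B_{ii}$, it is trivially invertible and $\mathbf{m}_\lambda$ can be written out one coordinate at a time. The goal then reduces to understanding a single scalar expression per coordinate.

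First I would compute the $i$-th component of $\mathbf{m}_\lambda - \mathbf{b}$. Substituting the diagonal forms and simplifying, the $\mathbf{b}_i$ terms cancel and I expect to obtain the clean factorization
\[
(\mathbf{m}_\lambda)_i - \mathbf{b}_i = \frac{\lambda A_{ii}}{\lambda A_{ii} + (1-\lambda)B_{ii}}\,(\mathbf{a}_i - \mathbf{b}_i) =: t_i(\lambda)\,(\mathbf{a}_i - \mathbf{b}_i).
\]
Hence $\norm{\mathbf{m}_\lambda - \mathbf{b}}_B^2 = \sum_i B_{ii}(\mathbf{a}_i - \mathbf{b}_i)^2\, t_i(\lambda)^2$, a nonnegative-weighted sum of the squares of the scalar gains $t_i$. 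The crux is then to show each $t_i$ is nondecreasing on $[0,1]$, which I would do by differentiating: the quotient rule should give $t_i'(\lambda) = A_{ii}B_{ii} / (\lambda A_{ii} + (1-\lambda)B_{ii})^2 > 0$. Together with $t_i(0) = 0$, this shows $t_i(\lambda) \geq 0$ and increasing, so each $t_i(\lambda)^2$ is increasing as well.

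Finally I would combine the pieces: since every coefficient $B_{ii}(\mathbf{a}_i - \mathbf{b}_i)^2$ is nonnegative and each $t_i(\lambda)^2$ is nondecreasing, the sum $\norm{\mathbf{m}_\lambda - \mathbf{b}}_B^2$ is nondecreasing in $\lambda$, and taking the (monotone) square root yields the claim. I do not anticipate a genuine obstacle here, since the diagonal assumption makes every quantity explicit; the only steps requiring care are the algebraic cancellation that produces the factor $t_i$, and the observation that monotonicity of each summand transfers to the sum precisely because the weights are sign-definite. This last point is exactly where the axis-aligned hypothesis does its work: without diagonal $A,B$ the coordinates couple through cross terms that need not be controllable, which is consistent with the paper's remark that monotonicity in the general case is only observed empirically.
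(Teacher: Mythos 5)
Your proof is correct and takes essentially the same route as the paper: both exploit diagonality to decouple the computation coordinatewise, reduce to the scalar gain $t_i(\lambda) = \lambda A_{ii}/(\lambda A_{ii} + (1-\lambda)B_{ii})$, and conclude from its nonnegativity and monotonicity that the weighted sum of squares is monotone. The only cosmetic differences are that the paper first translates $\mathbf{b}$ to the origin while you keep it explicit, and you verify $t_i' > 0$ by differentiation where the paper simply asserts the monotonicity of the fraction.
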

\begin{proof}
First note that $\norm{\mathbf{m}_\lambda - \mathbf{b}}$ is translation invariant. Without loss of generality, let $\mathbf{b} = 0$. Making use of the fact that diagonal matrices commute with each other,
\begin{equation*}
\begin{split}
    \norm{\mathbf{m}_\lambda}_B^2 &= \\
    \mathbf{a}^T&A^T(\lambda A + (1-\lambda)B)^{-T}B(\lambda A + (1-\lambda)B)^{-1}A\mathbf{a}\\
    &=\sum_i \mathbf{a}_i^2B_{ii}\left(\frac{\lambda A_{ii}}{\lambda A_{ii} + (1-\lambda)B_{ii}}\right)^2\\
    &= \sum_i \mathbf{a}_i^2B_{ii}\left(\frac{\lambda}{\lambda + (1-\lambda)\frac{B_{ii}}{A_{ii}}}\right)^2.
\end{split}
\end{equation*}
As $A$ and $B$ are positive definite, $\frac{B_{ii}}{A_{ii}} \in (0, \infty)$. Additionally,
\begin{equation*}
\frac{\lambda}{\lambda + (1-\lambda)\frac{B_{ii}}{A_{ii}}},
\end{equation*}
is monotonically increasing in $\lambda \geq 0$, for any $\frac{B_{ii}}{A_{ii}} \in (0, \infty)$. As $\mathbf{a}_i^2B_{ii}$ is also positive, $\norm{\mathbf{m}_\lambda}_B^2$ is monotonically increasing for $\lambda \in [0,1]$. Squaring a positive function preserves monotonicity, so $\norm{\mathbf{m}_\lambda - \mathbf{b}}_B$ is monotonically increasing for $\lambda \in [0, 1]$, thus completing our proof.
\end{proof}

The length of the path $\mathbf{m}_\lambda$ is an upper bound on the collision-free distance between the centroids of two ellipsoids. We define a distance heuristic $\hat{d}$ as
\begin{equation*}
    \hat{d}(\mathcal{E}_1, \mathcal{E}_2) = \int_0^1 \norm{\frac{d\mathbf{m}_\lambda}{d\lambda}} d\lambda,
\end{equation*}
which we compute numerically. The weighted edge set $\mathscr{E}$ of the ellipsoid connectivity graph is then defined by
\begin{equation*}
   \mathscr{E} = \left\{ (i, j, \hat{d}(\mathcal{E}_i, \mathcal{E}_j)) ~|~ \mathcal{E}_i \cap \mathcal{E}_j \neq \emptyset \right\},
\end{equation*}
for $\mathcal{E}_i, \mathcal{E}_j \in \mathcal{E}$, where $e = (i, j, w)$ denotes an edge from vertex $i$ to vertex $j$ with weight $w$. The resulting edge weights may violate the triangle inequality. To make the graph metric and provide a tighter bound on the shortest distance between ellipsoid centers, the weight of each edge can be updated to the weight of the lowest-weight path between its two endpoints in the original graph.

The graphical interpretation of environment free space, $\mathcal{G}$, and the associated obstacle-free ellipsoids can now be used for task assignment and path planning, while the associated ellipsoids are useful for collision-free motion planning.

\subsection{Path Planning}\label{sec:path_planning}
To move around the environment while avoiding obstacles, a mobile agent must plan a path in the ellipsoidal decomposition graph $\mathcal{G}$ from its current location $\mathbf{x}_{start}$ to some $\mathbf{x}_{goal}$. To find such a path, we augment $\mathcal{G}$ with an additional vertex for the start and goal positions. Each of the two additional vertices shares an edge with vertices corresponding to the ellipsoids containing that point. The augmented vertex set is $V' = V \cup \{v_{start}, v_{goal}\}$, and the augmented edge set is
\begin{align*}
    \mathscr{E}' = \mathscr{E} &\cup \{(v_{start}, j, \norm{\mathbf{x}_{start} - \mathbf{e}_j}) ~\vert~ \norm{\mathbf{x}_{start} - \mathbf{e}_j}_{E_j} \leq 1\} \\
                 &\cup \{(v_{goal}, j, \norm{\mathbf{x}_{goal} - \mathbf{e}_j})
    ~|~\norm{\mathbf{x}_{goal} - \mathbf{e}_j}_{E_j} \leq 1 \},
\end{align*}
for $\mathcal{E}_j(E_j, \mathbf{e}_j) \in \mathcal{E}$.

The shortest path from $v_{start}$ to $v_{goal}$ in the augmented graph $\mathcal{G}' = (V', \mathscr{E}')$, which we denote $\mathcal{E}^*$, corresponds to an upper bound on the minimum collision-free distance from $\mathbf{x}_{start}$ to $\mathbf{x}_{goal}$. If the start and goal points are within the same ellipsoid, this pathfinding method may return a longer ellipsoid sequence than the single shared ellipsoid, depending on their position relative to the current ellipsoid's center. We explicitly check for this case, and if the start and end points share an ellipsoid we set $\mathcal{E}^*$ to be the current ellipsoid. In the patrolling case study, we pre-compute the shortest path between all pairs of nodes with the Floyd-Warshall algorithm \cite[Chapter~5]{clrs}.

We use $\mathcal{E}^*$ to construct a sequence of waypoints between $\mathbf{x}_{start}$ and $\mathbf{x}_{goal}$. A point in the intersection of each successive pairs of ellipsoids is added to the sequence, computed as described in Section \ref{sec:graph_construction}. 

\section{Ellipsoid Decomposition with MPC}\label{sec:mpc}

Given a sequence of waypoints from the graph-based path planning algorithm, we use a flexible Model Predictive Control (MPC) formulation to solve for the agent control inputs. Progress toward the next waypoint is achieved with a distance-based stage cost over the planning horizon, and a dynamically updated ``cost switching'' index ensures further progress toward the following waypoint. A similar updating index constrains each point on the planning horizon to be inside one of two free-space ellipsoids.

\subsection{Cost Switching}
To make progress toward the final goal position, we expect the agent to move toward the next waypoint in the sequence generated by the path planning algorithm. In practice, this intermediate waypoint may be within the planning horizon. It would be undesirable for the waypoint to be fixed for the entire planning horizon, which would lead to a trajectory that slows down or stops at the waypoint. Instead, part of the trajectory has a cost function related to the closest waypoint, while the rest is incentivized to reach the following waypoint.

We denote the cost functions related to the first and second waypoints as $l^1(\mathbf{x})$ and $l^2(\mathbf{x})$. In the simple case where the agent's only objective is moving between a fixed start and end point, then $l^1$ and $l^2$ is the distance between the target and corresponding waypoint. Our case study in Section \ref{sec:case_study} shows an example of a more complicated cost function. We define a \textit{cost switching index} $h_{t}$, which determines when the MPC switches from the primary cost function $l^1$ to the secondary cost function $l^2$. Initially, $h_t$ is set to $N+1$, one larger than the MPC horizon. Each time a solution is returned from the MPC, the predicted trajectory for the agent is compared to the intermediate waypoint location. $h_t$ is set to the first index where the tracking trajectory enters within some radius of the waypoint. After this point, it spends the rest of the horizon optimizing for the secondary goal. As the agent approaches the first waypoint, $h_t$ decreases and more of the trajectory is pulled toward the second waypoint. This effect is illustrated in Figure \ref{fig:switching_index_example}.

\begin{figure}
    \centering
    \includegraphics[width=\columnwidth]{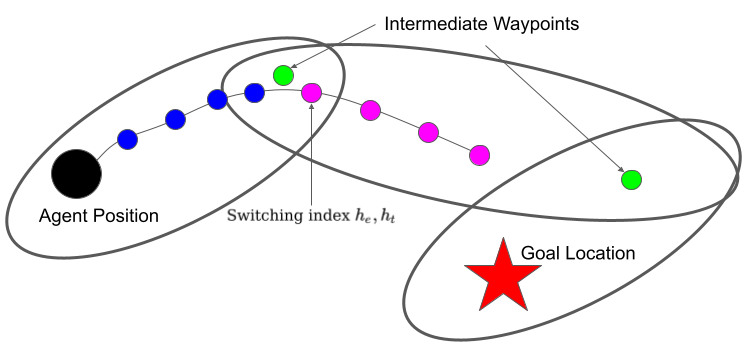}
    \caption{Example of switching indices $h_t$ and $h_e$. The first half of the trajectory (in blue) optimizes cost $l^1(x)$ based on distance to intermediate waypoint. The second half of the trajectory (in magenta) optimizes cost $l^2(x)$ toward the following waypoint. The switching indices are updated between MPC iterations depending on the predicted trajectory. In general, $h_e$ and $h_t$ need not be the same. }
    \label{fig:switching_index_example}
\end{figure}

\subsection{Constraint Switching}
Staying within the union of obstacle-free ellipsoids $\mathcal{E}$ allows an agent to avoid collisions with the environment. However, the union of ellipsoids is nonconvex, and imposing the constraint directly onto the MPC increases solution times and leads to poor local optima. Previous works \cite{deits-uav-mip}, \cite{landry2016aggressive} address this issue by formulating the constraints as a mixed-integer convex nonlinear program, where exactly one convex region is the active constraint at each timestep. Unfortunately, requiring the solver to support integer constraints removes the possibility of using faster continuous nonlinear solvers. Instead, we employ a \textit{constraint switching index} $h_e$ analogous to the cost switching index that controls which free-space ellipsoid constrains each stage of the MPC.

The ellipsoids that we choose come from the path planning solution $\mathcal{E}^*$. The first ellipsoid contains the agent and first waypoint. The second ellipsoid contains the first and second waypoints. We denote the first and second collision avoidance ellipsoids as $\mathcal{M}_1(M_1, \mathbf{m}_1)$ and $\mathcal{M}_2(M_2, \mathbf{m}_2)$. The first ellipsoid constraint is active until step $h_e$ of the MPC horizon. The second ellipsoid constraint is active from step $h_e$ onward. This results in a pair of quadratic constraints for each agent, $\norm{\mathbf{x}_k - \mathbf{m}_i}_{M_i}^2 \leq d_k^i,~i \in \{0, 1\}$,
defined for each step $k$ of the MPC horizon.

Setting $d_k^i$ to 1 constrains $\mathbf{x}$ to be within ellipsoid $\mathcal{M}_i$ at time $k$. Setting $d_k^i$ to $\infty$ turns off the constraint at time $k$. We set $d_k^0$ to $\infty$ if $k \geq h_e$, and 1 otherwise. Similarly, $d_k^1$ is equal to $\infty$ if $k < h_e$, and 1 otherwise. Thus, in the interval $[0, h_e)$, only the $\mathcal{M}_1$ ellipsoid constraint must be satisfied, and vice-versa for $[h_e, N-1]$. Within each of these intervals, the feasible positions for $\mathbf{x}$ are convex, which yields more reliable MPC performance in practice. We update $h_e$ in a similar manner to $h_t$, based on what portion of the previous MPC solution was contained in the second ellipsoid.

\subsection{Full Formulation}

The full MPC formulation can be written as a constrained nonlinear optimization problem:

\begin{subequations}\label{eq:mttmpc}
    \begin{alignat}{2}
        \argmin_{\mathbf{w}_{1:N}, \mathbf{u}_{1:N}}  ~~
        & \sum_{k=1}^{N} l_k^1(\mathbf{x}_k)\mathds{1}(k < h_t) + l_k^2(\mathbf{x}_k)\mathds{1}(k \geq h_t) + q(\mathbf{u}_k)\\
        \text{s.t.} ~~          & \mathbf{w}_1 = \mathbf{w}(0), \\
        & \dot{\mathbf{w}}(t_{c,n}) = f(\mathbf{w}(t_{c,n}), \mathbf{u}(t_{c,n})),\label{eq:collocation}\\
        & \norm{\mathbf{x}_k - \mathbf{m}_i}_{M_i}^2 \leq d^i_k, ~i\in \{0,1\}\\
        & \mathbf{w} \in \mathcal{W}, \mathbf{u} \in \mathcal{U},~~ \forall k\in \{1,\dots,N\}.
    \end{alignat}
\end{subequations}
In our implementation, the system dynamics $\dot{\mathbf{w}} = f(\mathbf{w}, \mathbf{u})$ are enforced with a third order collocation method \eqref{eq:collocation}, as discussed in \cite{underactuated}. $t_{c,n}$ denotes the time corresponding to the $n^{th}$ collocation point, and the resulting constraint can be represented directly in terms of $\mathbf{w}_{1:N}$ and $\mathbf{u}_{1:N}$. A control effort regularization function $q$ is added to the cost function to encourage smoother control inputs. We let $q(\mathbf{u})$ be proportional to $\norm{\mathbf{u}}^2$. The first control input $\mathbf{u}_1$ is applied to the system, and then the optimization re-solved in a receding horizon manner. 

Note that the number of constraints necessary to enforce collision-free motion is constant and independent of the environment's complexity. While a polytope representation of free space results in a varying number of halfplane constraints, here we always have two constraints based two ellipsoids. This greatly aids MPC implementations where the solver is pre-compiled or reused between iterations.

\section{Persistent Monitoring Case Study}\label{sec:case_study}
We consider a multi-agent patrolling problem to demonstrate how our ellipsoidal decomposition algorithm provides a useful tool for jointly reasoning about planning and control. A team of tracking agents must periodically visit a group of moving target agents from a relative viewpoint. We demonstrate a task assignment algorithm that uses the decomposition graph $\mathcal{G}$ to reason about task assignments for the agents, and then uses the ellipsoids associated with paths in $\mathcal{G}$ to impose collision avoidance constraints in a viewpoint-aware MPC. We evaluate the proposed algorithm in a 2D simulation in two different environments that feature non-convex obstacles of varying shape and size, shown in Figure \ref{fig:ellipse_assignment_custom2} (Jagged Environment) and Figure \ref{fig:ellipsoid_decomp_and_assignment} (Spiral Environment). 

\subsection{Problem Formulation}
Here, a team of $N$ trackers seeks to observe $M$ targets, with target-centric viewpoints specified as objectives. The team of tracking agents must be controlled to minimize the time between observations of each target. For each target $m \in \{1,...,M\}$ we denote the desired viewing direction $\boldsymbol\eta$, viewpoint tolerance $\theta$, and maximum distance $r_{max}$. For a tracking agent to successfully observe target $m$, it must enter a circular arc relative to $m$ with medial axis $\boldsymbol\eta$, angle $\theta$, and radius $r_{max}$. 

Tracking agents move subject to second order unicycle dynamics, with the state consisting of position, heading angle, linear velocity, and angular velocity: $\mathbf{w} = \begin{bmatrix}x;~y;~\theta;~v;~\omega \end{bmatrix}$,
and the control input consisting of longitudinal and angular acceleration: $\mathbf{u} = [a;~ \alpha]$
The dynamics evolve according to $\dot{\mathbf{w}} = [v\cos(\theta);~v\sin(\theta);~\omega;~a;~\alpha]$.

The tracking agents are constrained to have a velocity in the range $[0, 3] m/s$, with acceleration constrained to be in $[-1, 1] m/s^2$. Angular velocity is constrained to $[-3, 3]$ rad/$s$, with angular acceleration constrained to $[-2, 2]$ rad/$s^2$. The acceleration constraints and second order dynamics make this a nontrivial control problem, even before considering the viewpoint cost function.

The target agents are given random obstacle-free destination points in the environment and move according to a first order unicycle model with a constant velocity of $0.3 m/s$ (angular velocity is controlled directly). They plan intermediate waypoints with the method outlined in Section \ref{sec:path_planning}, and heading is driven with a Proportional Derivative (PD) feedback controller. By virtue of the waypoint choice, the target agents are usually in obstacle-free space, although they may occasionally drive through obstacles.


\subsection{Task Assignment}
Before task execution, the approximate ellipsoidal decomposition of free space $\mathcal{G}$ is constructed. At runtime, each tracking agent is assigned a subset of the targets it is responsible for observing. This subset updates according to changes in distribution of targets. Each tracking agent plans an order for visiting each of its assigned targets. The assignment order is used to generate a collision-free reference path for the tracker, which is followed by an MPC.

We examine two complementary methods for the assignment step. The first method dynamically updates each agent's region of responsibility online. Let $\hat{D}(\mathbf{x}_i, \mathbf{x}_j)$ return the shortest distance based on the approximation from $\mathcal{G}'$. Each target is assigned to the closest tracker, as estimated by $\hat{D}$. 
The second method finds a static partition of the environment and assigns each tracker agent to a fixed region. We partition the set of free-space ellipsoids among the tracking agents with a greedy k-center approximation problem. For a metric graph $\mathcal{G} = (\mathcal{V}, \mathscr{E})$, the k-center problem is to find a subset of vertices $\mathscr{V}\in\mathcal{V}$ such that the maximum distance between any vertex in $\mathcal{G}$ and a vertex in $\mathscr{V}$ is minimized. The problem is NP-Complete, but there is a simple heuristic that guarantees at most twice the optimal distance: with $\mathscr{V}$ initialized with a single random node, add the node in $\mathcal{V}$ that is furthest from any node in $\mathscr{V}$ and repeat $k - 1$ times \cite{kcenter_2opt}. Such a decomposition is illustrated in Figure \ref{fig:ellipse_assignment_custom2}. Each tracking agent is assigned responsibility for the targets currently in its designated partition. While the regions have some overlap, they generally ensure that the tracking agents are distributed around the environment. The two methods of dividing targets can be used interchangeably, depending on whether it is desirable to have each agent confined to a known region.

Once the targets have been divided among the tracking agents, a greedy heuristic $h$ is used to guide the tracker's visitation order of its assigned targets. The visitation order heuristic is a weighted combination of each target's distance and the amount of time since it has been surveyed. Formally,

\begin{equation*}
    h^i = \hat{D}(\mathbf{x}_{tracker}, \mathbf{x}_{target}^i) - w_{staleness} (t_{now} - t_{seen}^i).
\end{equation*}

The heuristic balances opportunism in viewing targets that are convenient with an incentive to seek out targets that have not been seen in a while. The targets are pursued in ascending order of $h_i$, although as the ordering is updated at every timestep only the first two targets affect the control policy. 


\subsection{MPC Viewpoint Cost}
We define a viewpoint cost function that encodes the desire to view the target from a specific angle and distance. Consider a target at position $\mathbf{x}_{target}$ with a desired viewing direction $\boldsymbol\eta$. In order to get the desired observation of the target, the tracking agent must enter into an arc with opening angle $\theta$ around $\boldsymbol\eta$, and within distance $r$. Let $\mathbf{\tilde{x}}$ denote the vector from $\mathbf{x}_{target}$ to $\mathbf{x}_{tracker}$. The viewpoint cost function $l$ is composed of a term that penalizes the deviation of $\mathbf{\tilde{x}}$ from the direction of $\boldsymbol\eta$ and the length of $\mathbf{\tilde{x}}$ from the desired viewing distance. The viewpoint cost function is defined as
\begin{equation}\label{eq:viewpoint_cost}
    l(x) = \frac{\mathbf{\tilde{x}}^T}{\norm{{\mathbf{\tilde{x}}}}} \boldsymbol{\eta} + (\norm{\mathbf{\tilde{x}}}^2 - r^2)^2.
\end{equation}

The position and direction of interest of the target are predicted over the horizon of the MPC planning. As a result, $l(\mathbf{x})$ is implicitly a function of time from the time-dependence of $\mathbf{\tilde{x}}$ and $\mathbf{\eta}$. We make this explicit by notating $l_i(\mathbf{x})$ as the cost function relative to the target at time $i$. Our implementation predicts the target's future motion assuming no control inputs are applied to the target over the planning horizon. When a tracker agent's current position relative to its current target enters the desired viewing cone, we consider the target to have been visited. When the tracking and target agent share an ellipsoid or are in adjacent ellipsoids, the cost function $l$ is used as cost $l^1$ or $l^2$ in \eqref{eq:mttmpc}. In this case, the cost switching index $h_t$ can be updated based on when the predicted trajectory enters the target's viewing cone.

\subsection{Simulation}
The viewpoint-optimizing cost function \eqref{eq:viewpoint_cost} for each agent is optimized subject to the agent dynamics by implementing \eqref{eq:mttmpc} in Casadi \cite{Andersson2019} and solving with IPOPT \cite{ipopt}. The MPC considers a five second horizon, comprised of 40 equally-spaced steps. It takes an average of 50ms per MPC solution on an Intel Core i7-9750H CPU @ 2.60GHz. The MPC is given its previous solution as an initial guess. The computation for decomposition and graph construction is dominated by the ellipse inflation at each seed point. The ellipse inflation takes under 6 seconds in both test environments. 

Still frames from our video illustrating the tracking task within Environment 2 are shown in Figure \ref{fig:city_evolution}.  We simulate both the static and dynamic ellipsoid region assignment algorithms. We compare these algorithms to a static and dynamic obstacle-oblivious assignment methods. The first comparison is to a static, obstacle-oblivious Voronoi partition of the space, with the same seed points used as the k-centers defining the ellipsoid partition. The second comparison is a dynamic Voronoi assignment algorithm, where targets are assigned to the closest tracker at every iteration without consideration of obstacles. All four algorithms use the same path planning and MPC implementations. Figure \ref{fig:mean_times_1} presents the mean time between visits for targets, and Figure \ref{fig:max_times_1} presents the maximum time between visits for the targets over 51 runs. Each run randomizes the initial positions of the target agents and simulates 625 seconds of the tracking task.
The static region assignments are slightly different between runs due to the stochastic nature of the k-centers approximation algorithm.

In both domains, the algorithms with dynamically allocated regions of responsibility outperform the static region assignments. This is unsurprising, as dynamic partitioning enables tracking agents to directly respond to shifting distributions of the target agents. In both environments, the dynamic ellipsoid decomposition method outperforms the dynamic Voronoi method, for both metrics of mean visit delay and maximum visit delay. The static ellipsoid decomposition slightly underperforms the static Voronoi method in the Spiral environment and outperforms it in the Jagged environment.
The static ellipsoid decomposition compares most favorably when there are larger free-space regions thinly separated by obstacles, which is better reflected in the Jagged environment. Overall, these simulations demonstrate the efficacy of our ellipsoid decomposition and path planning to perform complex multi-agent tasks.

\begin{figure}
    \centering
    \includegraphics[width=.8\columnwidth]{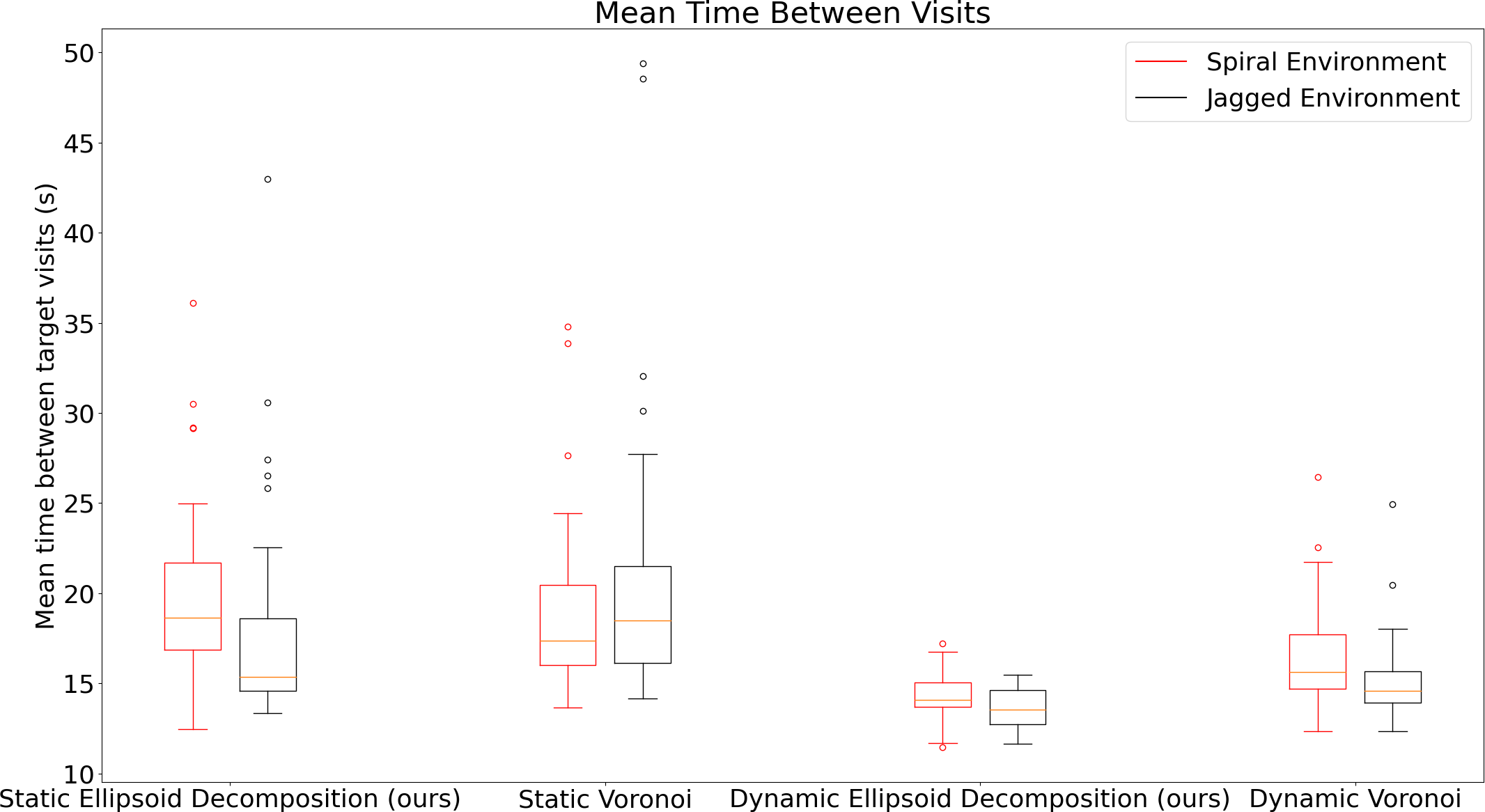}
    \caption{Mean time between target visitation for the four algorithms.}
    \label{fig:mean_times_1}
\end{figure}
\begin{figure}
    \centering
    \includegraphics[width=.8\columnwidth]{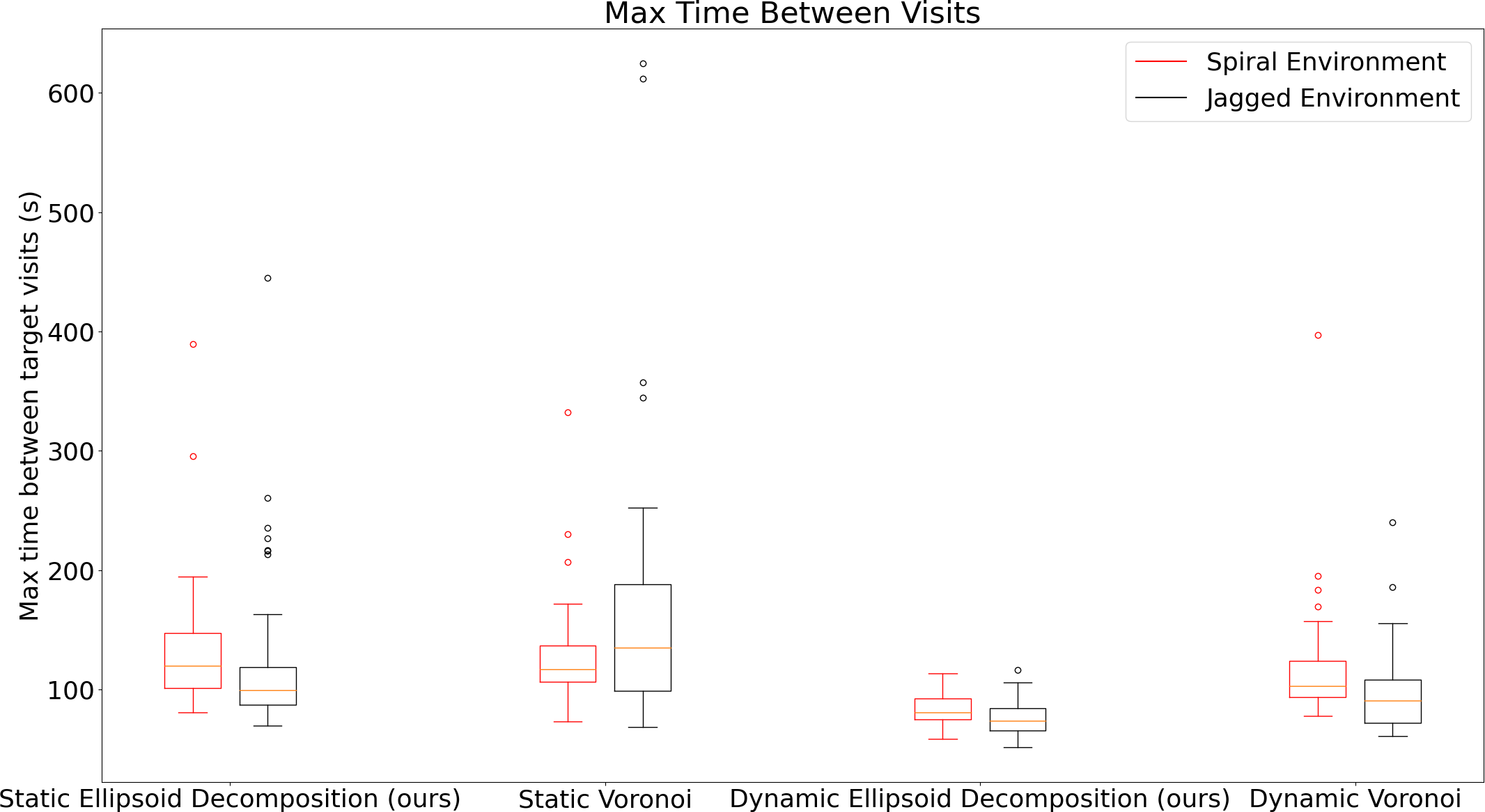}
    \caption{Max time between target visitation for the four algorithms.}
    \label{fig:max_times_1}
\end{figure}

\begin{figure}[h!]
    \begin{subfigure}{0.49\columnwidth}
        \centering
        \includegraphics[width=\textwidth]{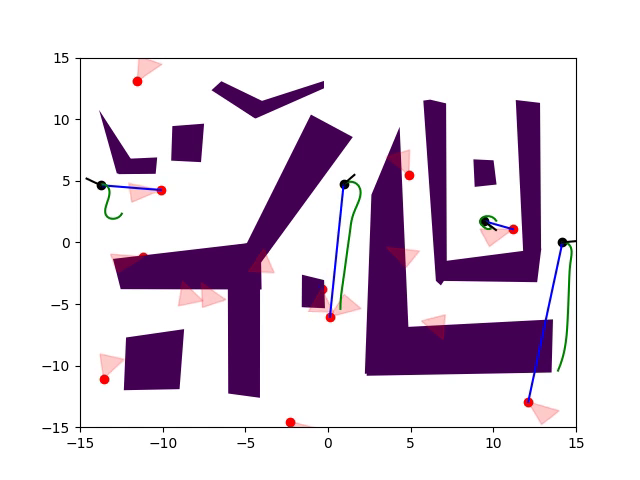}
        \caption{$t = 0.25$s}
    \end{subfigure} 
    \begin{subfigure}{0.49\columnwidth}
        \centering
        \includegraphics[width=\textwidth]{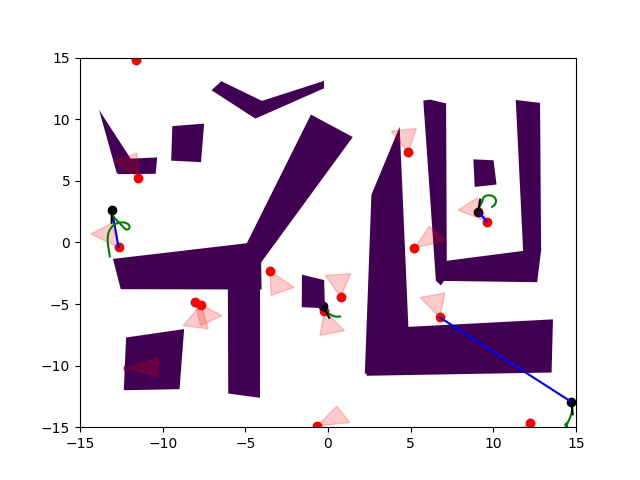}
        \caption{$t = 6.25$s}
    \end{subfigure} 
    
    \begin{subfigure}{0.49\columnwidth}
        \centering
        \includegraphics[width=\textwidth]{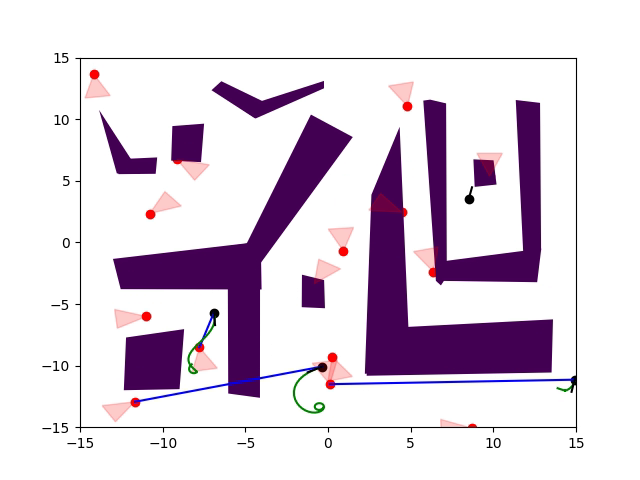}
        \caption{$t = 18.75$s}
    \end{subfigure} 
    \begin{subfigure}{0.49\columnwidth}
        \centering
        \includegraphics[width=\textwidth]{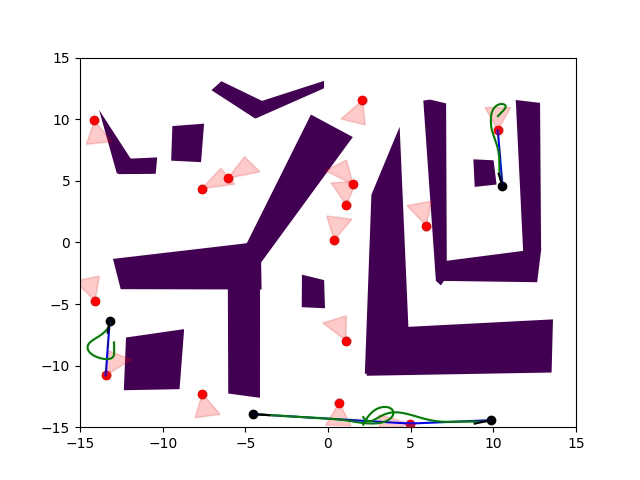}
        \caption{$t = 31.25$s}
    \end{subfigure} 
    
 \caption{Evolution of a tracking simulation over time. Four tracking agents are pictured as black dots. Purple blocks represent obstacles. Red dots represent targets of interest, and red triangles are relative viewpoint from which they must be viewed by a tracker. Green lines are the tracking agents' MPC plan at the current iteration. The dark blue line connects each tracker with its currently-assigned target.} 
\label{fig:city_evolution}
\end{figure}

\section{Conclusions and Future Work}

This paper has demonstrated a novel method for decomposing and reasoning about spatial structure in obstacle-rich 2D and 3D environments. The obstacle aware ellipsoid-based graph decomposition enables high-level coordination between agents, path planning, and collision-free motion planning in a unified representation. We have demonstrated the utility of this framework with a multi-agent persistent monitoring task where an MPC algorithm leverages the decomposition to monitor a team of target agents from desired viewpoints. We hope to extend the applications of this ellipsoid framework to other multi-agent tasks as a coordination and planning primitive.

\IEEEtriggeratref{18}
\bibliographystyle{IEEEtran}
\bibliography{IEEEabrv,references}

\end{document}